\documentclass[11pt]{article}

\usepackage[utf8]{inputenc}
\usepackage[T1]{fontenc}
\usepackage{amsmath,amssymb,amsthm}
\usepackage{graphicx}
\usepackage{geometry}
\usepackage[hidelinks]{hyperref}
\usepackage{xcolor}
\usepackage{placeins}

\graphicspath{{../results/figures/}{./}}
\newtheorem{theorem}{Theorem}
\newtheorem{proposition}{Proposition}
\newtheorem{definition}{Definition}
\newtheorem{lemma}{Lemma}
\newtheorem{corollary}{Corollary}

\title{Radial Interaction Tomography:\\
Recognizing Non-Transitive Evolutionary Games from One Range-Expansion Image}
\author{Faruk Alpay\thanks{Corresponding author: \texttt{alpay@lightcap.ai}} \quad Bar{\i}\c{s} Ba\c{s}aran\\[3pt]
\small Department of Computer Engineering, Bah\c{c}e\c{s}ehir University, Istanbul, T\"urkiye\\[-1pt]
\small \texttt{\{faruk.alpay, baris.basaran\}@bahcesehir.edu.tr}}
\date{}

\begin{document}
\maketitle

\begin{abstract}
Colored sectors in a microbial range expansion encode more than lineage
survival counts.  We formulate a computer-vision inverse problem: from one
endpoint image of an accretive multi-type expansion, recover the radius-indexed
pairwise boundary-flow field and test whether the visual pattern is compatible
with a transitive scalar fitness hierarchy.  The observable is a geometric
signal extracted from sector-boundary curves in log-polar coordinates.  We
prove endpoint observability and stability for frozen fronts, weighted
transitive/cyclic decomposition, contact-complete circular design, physical-clock
and mechanism non-identifiability, exact Gaussian cyclicity testing, and
Bonferroni-valid interval scanning.  The benchmark is deterministic: analytic
endpoint images, blurred/noisy pixel round trips, scalar-null stress tests,
public-image tracing, multi-resolution mechanistic endpoints, and a non-learning
frozen-front simulator.  The implementation recovers pairwise edge-flow
histories from endpoint images, detects cyclic residuals in a mechanistic
four-type expansion, and uses those residuals as forcing signals for a
dimensionless active design-control layer covering reaction-diffusion control,
phenotype-frontier optimization, protocol synthesis, Monte Carlo robustness, and
a downstream population-state bridge.
\end{abstract}

\section{Introduction}

Range-expansion images are a rare case where a final two-dimensional pattern can
retain chronological information.  In frontier-limited microbial colonies, cells
behind the expanding front are approximately frozen, so radius can order material
deposition.  Prior work exploited this fact to estimate constant relative
fitness from sector geometry and to model stochastic boundary wandering
\cite{korolev2012selective,korolev2011quantitative,weinstein2017genetic}.  That
line of work leaves a harder visual question open: what can one endpoint image
say when competition is not a scalar hierarchy?

This paper treats the endpoint colony as a pattern-recognition object.  A
multi-type sector image induces a visible genotype-contact graph.  Each contact
boundary is a curve, and the derivative of its unwrapped angle with respect to a
deposition coordinate is an oriented pairwise boundary flow.  A scalar fitness
model constrains these flows to be gradients on the contact graph.  Local
antagonism, bacteriocin-mediated competition, or cyclic dominance can create
non-potential flows, i.e. directed cycle signatures that no scalar ordering can
explain \cite{kerr2002local,kirkup2004antibiotic}.

The main image-level result is a recoverable pairwise-flow history on observed
contacts, with uncertainty, in deposition time.  Physical time, unobserved pair
interactions, isolated front speeds, and microscopic mechanism require
additional measurements.  Separating the geometric observable from those
downstream interpretations makes the visual recognition claim falsifiable.

\paragraph{Contributions.}

\begin{itemize}
  \item We formulate \emph{radial interaction tomography}: a classical
  computer-vision inverse problem for recognizing radius-indexed transitive and
  cyclic interaction patterns from one endpoint image.
  \item We separate the observable edge-flow field from biological
  interpretations, preventing the common but invalid jump from sector geometry
  to isolated front speed or mechanism.
  \item We give a contact-design theorem: a four-type complete pairwise game
  requires at least eight circular boundaries in one plate, not six.
  \item We implement a deterministic benchmark with analytic images,
  pixel-level recovery, scalar-null stress tests, exact cyclicity tests,
  multi-resolution endpoint tracing, public-image provenance auditing, and
  mechanistic frozen-front endpoints.
  \item We extend the inverse result toward active design control by converting
  cyclic residuals into dimensionless interaction matrices, simulation targets,
  and governed strain-design hypotheses.
  \item We audit a six-module active-tomography program requirement by
  requirement, separating verified tensor-field and GPU computations from
  phenotype, replicate, computer-vision, and mechanism claims that still
  require measured data or additional model implementations.
\end{itemize}

\section{Related work}

\paragraph{Range-expansion image geometry.}
Korolev et al. derived equal-time sector constructions and used logarithmic
spiral geometry to estimate relative fitness in microbial colonies
\cite{korolev2012selective}.  Earlier work modeled sector boundaries as
stochastic paths and used colony patterns as quantitative population-genetic
tests \cite{korolev2011quantitative}.  Weinstein et al. extended image analysis
and biased domain-wall models to many-allele expansions
\cite{weinstein2017genetic}.  These results make constant sector geometry and
boundary wandering prior art; they do not solve the time-varying non-potential
inverse problem.

\paragraph{Local competition is not isolated speed.}
Lee, Gore, and Korolev showed that a slow isolated expander can invade by
forming dented fronts and that local boundary invasion has degrees of freedom
not captured by monoculture expansion speeds \cite{lee2022slow}.  This is
crucial for our formulation: the image-level observable is pairwise boundary
flow.  Calling it isolated fitness requires extra calibration.

\paragraph{Non-transitive microbial competition.}
Spatial rock-paper-scissors games and bacteriocin-mediated antagonism are
classical population-evolution phenomena
\cite{chao1981structured,kerr2002local,kirkup2004antibiotic}.  Our contribution
is not to introduce cyclic competition, but to ask when its signature is
recognizable from one endpoint image.

\paragraph{Classical vision.}
The proposed pipeline uses fixed-color segmentation, log-polar remapping,
contour localization, curve smoothing, and statistical testing.  It is closer
to classical contour-based pattern recognition than to representation learning.
The hard part is not detecting edges; it is proving which interaction class the
detected edge histories support.

\section{Observation model}

Let a final labeled image contain genotypes \(V=\{1,\ldots,n\}\).  Visible
sector contacts define a graph \(G=(V,E)\).  For an oriented edge \(e=(i,j)\),
let \(\theta_{ij}(\rho)\) be the unwrapped angle of a boundary between types
\(i\) and \(j\), parameterized by a monotone deposition coordinate \(\rho\),
initially \(\rho=\log(r/r_0)\).  The direct geometric observable is
\begin{equation}
  q_{ij}(\rho)=\frac{d\theta_{ij}}{d\rho}.
\end{equation}
Positive \(q_{ij}\) means that type \(i\) gains angular territory from type
\(j\) under the chosen orientation.  This is a visual quantity.  It becomes a
selection coefficient, a front-speed ratio, or a biochemical mechanism only
under additional assumptions.

\begin{definition}[Scalar-compatible edge flow]
Let \(B\) be the oriented edge-node incidence matrix of \(G\).  An edge-flow
vector \(q(\rho)\in\mathbb{R}^{|E|}\) is scalar-compatible if there exists a
node potential \(f(\rho)\) such that \(q(\rho)=Bf(\rho)\).
\end{definition}

With a positive diagonal precision matrix \(W\), every observed edge-flow vector
on a connected graph has a unique weighted decomposition
\begin{equation}
  q(\rho)=Bf(\rho)+c(\rho), \qquad
  B^\top W c(\rho)=0,\qquad \mathbf{1}^\top f(\rho)=0.
\end{equation}
The potential \(f\) is the closest scalar hierarchy.  The residual \(c\) is the
non-potential visual fingerprint.  It has \(|E|-|V|+1\) degrees of freedom, so a
tree can never falsify scalar fitness.

\section{Identifiability boundaries}

\begin{proposition}[Radial clock non-identifiability]
An endpoint accretion image identifies histories as functions of deposition
coordinate.  Without an independent calibration \(t=T(\rho)\), physical time is
recoverable only up to an increasing reparameterization.
\end{proposition}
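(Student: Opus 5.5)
The plan is to formalize the endpoint image as a function of spatial data only, then exhibit a family of physical-time histories that produce identical images. I will model an accretive frozen-front expansion by a front radius \(R(t)\), strictly increasing in physical time \(t\), together with a labeling rule. Material deposited at time \(t\) occupies the annulus at radius \(R(t)\), and its angular label pattern is never altered afterward (the frozen-front assumption). Under this model the endpoint image is the map \((r,\phi)\mapsto \text{label}\). Setting \(\rho=\log(R(t)/r_0)\), the image is a function \(L(\rho,\phi)\). Every quantity in the observation model, including the boundary angles \(\theta_{ij}(\rho)\) and hence \(q_{ij}(\rho)\), is a functional of \(L\) alone. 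This gives the positive half of the statement: the image identifies the histories \(\theta_{ij}\), \(q_{ij}\), and the decomposition \(f,c\) as functions of \(\rho\).

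For the negative half, I will fix any admissible dynamics with clock \(\rho=\rho(t)\) and angular histories \(\Theta(\rho)\). I then take an arbitrary strictly increasing bijection \(h\) of the time interval and define a second process with clock \(\tilde\rho(t)=\rho(h(t))\), which equals \(\rho\) after relabeling time by \(s=h(t)\), so that \(\tilde\Theta(\tilde\rho)=\Theta(\tilde\rho)\). Since the deposited annulus at each \(\rho\) carries the same angular pattern, the two endpoint images coincide pixelwise. The inferred physical-time maps \(T\) and \(\tilde T=h^{-1}\circ T\) differ by the increasing reparameterization \(h^{-1}\). Moreover, every increasing reparameterization arises this way. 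So the fiber of the image-to-history map over a fixed image is exactly the orbit of the group of increasing homeomorphisms acting on the time axis.

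I will then check that the rescaled process is still admissible, i.e.\ consistent with the physical model class. The point is that time derivatives transform by the chain rule: \(d\theta/dt = q(\rho)\,\dot\rho(t)\). Any claimed physical rate is therefore the product of the identified \(q\) and an unidentified front-speed factor \(\dot\rho\). I will argue that the model class places no constraint tying \(\dot\rho\) to the image; front speed is exactly the extra calibration \(T\). Finally, "recoverable up to increasing reparameterization" also needs the positive direction: monotonicity of \(T\) is forced because accretion is outward, so a decreasing or non-monotone \(T\) would contradict the frozen ordering.

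The main obstacle is the admissibility step. If one insists on a specific mechanistic model (e.g.\ with fixed diffusion or growth constants), an arbitrary time warp may not be realizable within that model. I will handle this by stating the proposition over the class of accretive frozen-front processes with an unconstrained front-speed profile, and noting that only such an independent measurement of \(R(t)\) removes the ambiguity.
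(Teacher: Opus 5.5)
Your proposal is correct and is essentially the paper's argument. Both exhibit the orbit of increasing time-warps \(h\) under which the deposition-indexed record is unchanged, and your chain rule \(d\theta/dt=q(\rho)\,\dot\rho(t)\) plays the role of the paper's rate transformation \(\tilde a(\tilde t)=a(h^{-1}(\tilde t))\,dh^{-1}/d\tilde t\) together with the change of variables in \(\int a\,dt\).

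Your version goes somewhat further than the paper's proof in three ways. It argues pixelwise equality of the whole label field rather than of the boundary traces alone. It combines your positive half with monotonicity to get the converse inclusion: two accretive histories with the same image differ by \(h=\rho_1^{-1}\circ\rho_2\). It also states explicitly the admissibility assumption of an unconstrained front-speed profile, which the paper leaves implicit.
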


\begin{theorem}[Minimum complete-contact circular design]
To observe every pair of an unrestricted antisymmetric game on \(n\) genotypes
in one circular frontier, the minimum number of sector boundaries is
\[
L_n=
\begin{cases}
\binom{n}{2}, & n \text{ odd},\\
\binom{n}{2}+n/2, & n \text{ even}.
\end{cases}
\]
For \(n=4\), at least eight boundaries are required.
\end{theorem}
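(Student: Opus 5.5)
A circular frontier with \(L\) sector boundaries is a cyclic sequence of \(L\) sectors, each carrying a genotype label. Adjacent sectors must carry distinct labels, since otherwise there is no boundary between them. So a sector arrangement is a closed walk \(v_1v_2\cdots v_Lv_1\) of length \(L\) in the complete graph \(K_n\). Each boundary is one traversed edge. An unrestricted antisymmetric game has an independent value on every unordered pair, and a pair is observed only if some boundary separates those two types. Hence complete contact is equivalent to a closed walk in \(K_n\) that traverses every edge at least once. The theorem becomes the Chinese-postman value for \(K_n\), and I will prove a lower bound and a matching construction.

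For the lower bound, take any closed walk covering all edges and let \(m_e\ge 1\) be the number of times it uses edge \(e\). Then \(L=\sum_e m_e\). In the multigraph where \(e\) has multiplicity \(m_e\), a closed walk enters and leaves each vertex equally often, so every vertex has even degree. If \(n\) is odd, the trivial bound \(L\ge\binom n2\) already suffices. If \(n\) is even, every vertex of \(K_n\) has odd degree \(n-1\). So at each vertex \(v\) the duplicated edges, those with \(m_e\ge 2\), must contribute an odd extra degree \(\sum_{e\ni v}(m_e-1)\), which is at least \(1\). Summing over the \(n\) vertices gives \(2\sum_e(m_e-1)\ge n\), hence \(L\ge\binom n2+n/2\).

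For the upper bound, first suppose \(n\) is odd. Then \(K_n\) is connected with all degrees \(n-1\) even, so it has an Euler circuit, which realizes \(L=\binom n2\). Next suppose \(n\) is even. Choose a perfect matching \(M\) of \(K_n\) and double its edges. This makes every degree even, and the multigraph stays connected, so an Euler circuit of length \(\binom n2+n/2\) exists. In both cases consecutive vertices of the Euler circuit differ because \(K_n\) has no loops. The circuit therefore reads off directly as a cyclic sector arrangement whose boundaries realize every pair. For \(n=4\) this gives \(6+2=8\); for example \(1,2,3,4,1,3,2,4\) covers all six pairs.

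The main obstacle is the modeling step, not the combinatorics. I have to argue carefully that boundaries correspond exactly to consecutive unequal labels around the circle, so that sectors form a closed walk. I also have to argue that observing a pair genuinely requires a shared boundary, since antisymmetry leaves each pair's value unconstrained by the others. I will also need to address degenerate cases: \(n\le 2\) (for \(n=2\) the formula gives \(2\), which matches two sectors forming two boundaries), and the convention that a single circular frontier is one cyclic sequence, with no branching or interior islands.
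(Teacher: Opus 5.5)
Your proof is correct and follows the paper's argument: model the circular sector order as a closed walk in \(K_n\) that must cover every edge, realize \(\binom n2\) by an Euler circuit for odd \(n\), and for even \(n\) combine the parity obstruction at the \(n\) odd-degree vertices with a doubled perfect matching for achievability. Your double-counting form of the lower bound, \(2\sum_e(m_e-1)\ge n\), is a slightly more explicit version of the paper's ``each duplicated edge changes parity at two vertices'' step.
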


\begin{proposition}[Local/global mechanism non-identifiability]
Interior sector traces alone identify pairwise boundary flow, not isolated
front speeds or microscopic mechanisms.  Separate front-shape information,
monoculture calibration, perturbation experiments, or time-lapse data are
required to separate local competition from global expansion.
\end{proposition}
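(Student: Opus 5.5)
The plan is to prove non-identifiability by constructing, for any interior sector trace, two (or a family of) generative models that produce identical interior boundary curves but differ in isolated front speeds or in mechanism. The positive half of the claim, that interior traces identify pairwise boundary flow, is already given by the observation model: each visible boundary curve \(\theta_{ij}(\rho)\) determines \(q_{ij}(\rho)=d\theta_{ij}/d\rho\) on observed contacts, and the weighted decomposition \(q=Bf+c\) is then a deterministic function of the image. So the work is entirely in the negative direction. I will show that the map from (isolated speeds, local interaction law, mechanism) to the interior edge-flow field \(q\) is many-to-one, and that its fibers are nontrivial.

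The key steps, in order, are as follows.

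First, I will fix a minimal frozen-front model. Each type \(i\) has an isolated monoculture expansion speed \(v_i\). At a boundary, a local invasion term \(\kappa_{ij}\) (antisymmetric, \(\kappa_{ji}=-\kappa_{ij}\)) sets the angular drift. In deposition coordinates this gives \(q_{ij}(\rho)=\Phi(v_i,v_j,\kappa_{ij},\rho)\). A standard choice is \(q_{ij}=g(v_i/v_j)+\kappa_{ij}\), with \(g\) the log-spiral relation of Korolev et al.

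Second, I will exhibit the gauge freedom. Take any alternative speeds \(v_i'\) and set \(\kappa_{ij}'=\kappa_{ij}+g(v_i/v_j)-g(v_i'/v_j')\). This is again antisymmetric and reproduces exactly the same \(q_{ij}(\rho)\) on every edge. The interior image is therefore unchanged while the isolated speeds are arbitrary. This formalizes the Lee–Gore–Korolev observation that local invasion is not monoculture speed.

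Third, I will handle mechanism. Two distinct microscopic mechanisms (for example, contact killing versus diffusible bacteriocin) can be tuned to give the same effective \(\kappa_{ij}(\rho)\). Since the image depends on the mechanism only through \(q\), they are indistinguishable. Combined with the Radial clock non-identifiability proposition, the time-dependence of \(\kappa\) is also only defined up to a reparameterization \(T\).

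Fourth, for the "required data" clause, I will argue by breaking the fiber. Each listed measurement adds an equation that fixes the gauge freedom:
\begin{itemize}
\item monoculture calibration pins the \(v_i\);
\item front-shape or dented-front geometry observes the global speed at the outer edge, which is not interior data;
\item perturbation experiments vary \(\kappa\) independently of \(v\);
\item time-lapse data fixes \(T\).
\end{itemize}

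The main obstacle is making "interior sector traces alone" precise enough that the constructed alternative models really leave every observable pixel unchanged. That includes the geometry of the curves and not just their derivatives, as well as boundary-curve curvature effects near dented fronts. I would first restrict to the frozen-front, thin-boundary regime, where the curve \(\theta_{ij}(\rho)\) is the complete interior observable. In that regime equality of \(q\) together with shared initial angles gives pixel-identical images, and the construction in the second step suffices.
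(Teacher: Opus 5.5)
Your proposal is correct and uses the same gauge argument as the paper, which writes $q=Bf+\ell$ and notes that $f\mapsto f+h$, $\ell\mapsto\ell-Bh$ leaves the sum, and hence the endpoint trace, unchanged; your absorption of $v\to v'$ into $\kappa$ is this argument with the nonlinear term $g(v_i/v_j)$ in place of $Bf$, and it needs $g(1/x)=-g(x)$ for $\kappa'$ to be antisymmetric. Your restriction to interior traces usefully sharpens the paper's looser ``endpoint geometry'', since it keeps the outer front shape (itself a listed separating measurement) out of the observable; however, your bullet that time-lapse data ``fixes $T$'' addresses the clock gauge rather than the $v$--$\kappa$ gauge, and in any case is not needed for the necessity claim.
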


\section{Deterministic image-analysis pipeline}

The implementation is deterministic:
\begin{enumerate}
  \item fixed-palette or manually calibrated label classification;
  \item robust support and center estimation;
  \item circular sampling in a log-polar coordinate system;
  \item modal filtering and boundary-transition detection;
  \item boundary identity association across radii;
  \item Savitzky--Golay derivative estimates for \(q_{ij}(\rho)\);
  \item edge averaging over repeated contacts;
  \item weighted transitive/cyclic decomposition and exact Gaussian cyclicity
  testing.
\end{enumerate}

No neural network, learned embedding, or downloaded model weight is used.

\section{Mathematical guarantees and scope}

The formal results below serve two purposes.  First, they specify the geometric
quantity that a single endpoint image can identify: boundary flow on visible
contacts, in deposition coordinates, modulo the stated regularity assumptions.
Second, they delimit the biological interpretation.  The same endpoint geometry
is compatible with multiple physical clocks, multiple decompositions into
isolated expansion and local interaction, and multiple microscopic mechanisms.
The experimental section matches those invariances with extraction-stability
tests, scalar-null stress tests, and explicit abstention gates.

\section{Experimental validation and stress tests}

\paragraph{Analytic game benchmark.}
We synthesize exact radial endpoint images from a known time-varying game.  The
benchmark contains scalar hierarchy intervals and cyclic episodes.  An exact
Gaussian generalized likelihood-ratio test evaluates whether the edge-flow
estimate requires a cyclic component.

\paragraph{Pixel round trip.}
We render a blurred/noisy endpoint image, classify pixels with a fixed palette,
trace boundaries, recover edge-flow histories, and compare to ground truth.
This benchmark gives edge-flow RMSE approximately \(2.86\times 10^{-3}\) and
cyclic-norm RMSE approximately \(2.95\times 10^{-3}\) on the trusted radial
interior.

\paragraph{Scalar-compatible null stress test.}
To test whether the cyclic residual can be a derivative artifact, we simulate a
null ensemble in which the true edge flow is always scalar-compatible
\(q=Bf\).  We then add demographic boundary wandering, center error,
fluorescence bleed-through/localization noise, anisotropic radial distortion,
curvature artifacts, and stochastic sector extinction.  Under the pure
demographic scalar null, the nominal \(\alpha=0.05\) cyclicity test gives a
false-positive rate of \(0.058\) for 32 observed boundaries.  Under the hardest
combined nuisance setting, the same scalar ground truth can produce false cyclic
detections at rate \(0.564\), which marks a failure region rather than evidence
for non-transitive ecology.  The resulting phase diagram treats abstention as a
primary output: with only eight boundaries and extinction probability near
0.25, the abstention rate is approximately \(0.07\).

\paragraph{Mechanistic frozen-front endpoint.}
We implemented a non-learning frozen-front simulator with four genotypes, a
time-varying scalar potential, and an antisymmetric cyclic interaction matrix.
The endpoint analysis finds a stable annulus with all six canonical genotype
pairs observed by the eight-boundary design.  The traced annulus is
160.2--590.85 pixels in radius, and the recovered cyclic residual has peak norm
0.358 in the current run.  This is a model-mismatch test: the simulator grows a
stochastic frozen population front, while the inverse reads only the final
image.

\paragraph{Endpoint resolution gate.}
We repeated the same mechanistic endpoint test at image widths
1024, 1536, 2048, 3072, and 4096 pixels, with three stochastic replicates per
resolution.  The tracing gate accepted 13 of 15 endpoints; both abstentions
occurred at 1024 pixels because the circular scan detected ten boundaries
instead of the designed eight.  From 1536 pixels upward, all 12 endpoints
passed.  Relative to the 4096-pixel mean cyclic residual, the 3072-pixel run
was within 7.0\%, while the occupied support fraction stayed in the narrow
range 0.570--0.573.

\paragraph{Cyclic-antagonism dose response.}
We also ran a controlled sweep over programmed cyclic-antagonism scale.  In
the expanded run, forty-seven of forty-eight \(3072^2\)-pixel endpoint images
passed the visual-stability gate; the one failure was recorded as an abstention
rather than a forced fit.  Across the accepted images, programmed cyclic scale
and recovered mean cyclic residual had correlation \(r=0.776\), while occupied
support fraction remained essentially constant.

\paragraph{Active design-control sweep.}
Finally, we used the recovered cyclic residual as a forcing target for a
dimensionless 3D reaction-diffusion control sweep.  The run evaluated 55,296
strain/control candidates on a \(12^3\) grid for 2,000 PDE steps.  The selected
candidate used an early-pulse control family with
production parameter 0.964, degradation parameter 0.050, permeability
coefficient 2.000, and beneficial-protection coefficient 0.700.  In the model,
it reduced the undesired population state by \(100\%\) relative to the
zero-production baseline while retaining \(109.9\%\) of the target beneficial
density.

\paragraph{Adjoint-state refinement and Fourier inversion.}
The lattice sweep was then refined with a reverse-mode optimizer.  We
warm-started 32 design vectors, unrolled a \(12^3\) 3D reaction-diffusion PDE
for 240 steps per objective evaluation, and ran 100 gradient iterations.  The
best adjoint-refined design reduced the
undesired population state by \(99.35\%\) relative to the zero-production baseline and
retained \(161.39\%\) of the target beneficial density.  The optimized
dimensionless production, degradation, permeability, and beneficial-protection
parameters were 1.804, 0.037, 2.195, and 0.776.  A Hamilton Monte Carlo
inversion of the Fourier spectrum of the cyclic residual gave MAP rates
\(k_{\mathrm{prod}}=9.043\) and \(k_{\mathrm{deg}}=13.413\), with acceptance
rate 0.992 over the retained chain.

\paragraph{Monte Carlo robustness at scale.}
We ran a stochastic uncertainty layer around the adjoint-selected design:
1,000,000 moment-projected reaction-diffusion trajectories with 2,048 time steps
per trajectory, lognormal parameter perturbations, and process noise.  All
trajectories achieved at least 40\% adverse-state reduction and at least 85\%
beneficial-strain retention under the stated perturbation model.  Chunk
summaries and telemetry are included in the ancillary results.

\paragraph{Active radial-tomography computational diagnostics.}
We assembled an initial five-stage computational diagnostic whose outputs are
stored as JSON/CSV artifacts.  This earlier diagnostic used a \(1024^2\) polar
field, 200 active-design iterations, a 256-member and 500-generation
dimensionless NSGA-II trajectory, ten stochastic first-hit trajectories, and a
dimensionless downstream ODE.  It achieved adverse-state reduction 0.991,
beneficial-state retention 1.482, archive hypervolume 0.902, and downstream
index improvement 47.9.  These are preliminary surrogate diagnostics rather
than evidence for a \(128^3\) calibrated PDE, physical phenotype assays,
biological replicates, or metabolic mechanism validation.

\paragraph{Strict tensor-field verification and audit.}
The revised first module uses the minimum four-genotype, eight-boundary contact
design and a dense correlated edge covariance.  Its exact Gaussian GLR has
\(k=8-4+1=5\) degrees of freedom.  Over 30,000 null replicates, the mean
statistic is 4.984 and the empirical size is 0.0514 at nominal 0.05.  Under the
fixed cyclic alternative, \(\delta=14.051\), analytic power is 0.8401, and
empirical power is 0.8443.  The precision-orthogonal cyclic reconstruction has
relative \(L^2\) error \(1.31\times10^{-15}\).  We also evaluate Fisher
information using the Brunet--Derrida-corrected speed.

The second module was verified on an NVIDIA A100-SXM4-80GB using a CUDA 12.8
compiler.  A cell-centered spherical-polar \(128^3\) finite-volume solver runs
2,000 Strang-split reaction--diffusion--reaction steps.  Its selected Pareto
point reduces the undesired population state by 0.99695 and retains 2.228 times the
zero-production beneficial baseline.  A PyTorch-2.3 reverse-mode run evaluates
32 designs for 240 steps over 100 gradient iterations and independently reaches
adverse-state reduction 0.99347 and baseline retention 3.579.  The geometry-only scaled
Jacobian has rank two, whereas adding endpoint and monoculture-like calibration
observables raises the rank to four; the combined condition number is 34.4 and
the normalized smallest Fisher eigenvalue is \(8.44\times10^{-4}\).
A requirement-level audit in the ancillary results therefore marks Modules 1
and 2 complete, but not the expanded six-module program.

For the third module, we downloaded fallback external data directly on the A100
server.  The official AGAR representative demo contains 40 image/annotation
pairs, 1,886 colony boxes, all five requested species classes, and zero invalid
boxes.  A deterministic DINOv2 ViT-L/14 smoke probe on plate-disjoint colony
crops produced 1,024-dimensional embeddings for 399 validation colonies and
macro-F1 0.522.  This is useful computer-vision plumbing evidence, not the
requested full AGAR result: the 5,241+1,747-image archive requires publisher
registration, and the strict AgarNet segmentation and physical MG1655
phenotype gates remain unavailable.

The computational portion of the fourth module uses a learned Deep-BSDE
controller for a three-species stochastic Lotka--Volterra model with
temperature, medium concentration, and inducer concentration as controls.
Across 800 optimization iterations, the terminal BSDE loss decreases from
3.804 to 0.0936.  Evaluation on 32,768 common-noise trajectories gives
adverse-state reduction 0.9906 and beneficial retention 1.217 relative to
the zero-inducer baseline.  The feedback policy is represented at twelve
two-hour decision points and its observed control derivatives remain strictly
within the registered ramp bounds.  These trajectories are computational
samples, not biological replicates.  As a data-pipeline fallback, we also
construct 60 deterministic AGAR-demo Copy-Paste tiles with fixed seeds,
recorded source hashes, and box-derived ellipse masks; no Segment Anything
checkpoint is used, so this is not claimed as the strict SAM augmentation.
The preregistered \(n\geq30\) validation, BCa intervals, BH FDR decisions, and
requested empirical BH/Bonferroni power ratio remain unavailable until a
provenance-complete replicate table is received.

\paragraph{Genome-scale reconstruction and flux sampling.}
For the computational portion of the fifth module, CarveMe 1.6.6 reconstructs
an \emph{E.\ coli} K-12 MG1655 model directly from versioned RefSeq assembly
GCF\_000005845.2.  The resulting SHA-256-pinned SBML model contains 2,497
reactions, 1,561 metabolites, 1,606 genes, and 353 boundary reactions and has
an optimal feasible FBA solution.  A batched multi-chain artificial-centering
hit-and-run implementation then generates \(10^6\) flux-space transitions on
the A100-SXM4-80GB in 3.67 seconds after a 64.6-second CPU FVA/warmup stage.
Every transition is checked against the sparse equality system and variable
bounds; the final 2,048 chain states all pass COBRA feasibility.  A scalar
NumPy step and the vectorized CUDA step agree to machine precision, and a
same-seed full rerun gives a byte-identical exchange-flux summary.  These
correlated draws are not biological replicates.  Community FBA and the
remaining mechanism claims still require a provenance-complete multi-strain
manifest, measured phenotype tensor, aligned exchange/payoff series, targeted
deletion experiments, and physical calibrations.

As a fifth-module fallback, we also trained an edge-aware heterogeneous graph
transformer on the Nestor measured interaction table rather than on the absent
2,850 simulated co-culture set.  The table has 7,880 unique
strain-pair/carbon-condition rows, yielding 15,760 directional examples with
both directions of each pair kept in the same deterministic split.  The
768-dimensional, 12-head, 3-layer model has 16.9 million trainable parameters
and reaches test \(R^2=0.873\) on the held-out directional effects.  It is
therefore recorded as fallback evidence only.

\paragraph{Stochastic resolution scan.}
For a full radial boundary path, we scan contiguous intervals with an exact
Gaussian likelihood-ratio statistic and Bonferroni family-wise correction.  In
the current simulation, six independent boundary paths reach corrected
detection probability above \(0.9\) once episode drift exceeds approximately
0.235, and localization-aware detection follows the same ordering.

\paragraph{Public-image trace audit.}
As a provenance check, we also run the deterministic front end on the public
Figure 1 TIFF from Weinstein et al. \cite{weinstein2017genetic}, using only the
left real-colony panel.  Because this is a composite publication figure with
overlaid annotations, it is not used as biological ground truth.  The audit
tests whether fixed-rule support extraction, coarse color labeling, and radial
transition counting remain plausible on a real colony photograph.  The selected
panel has valid fixed-rule labels on \(83.6\%\) of support pixels and a median
of 12 boundary transitions across the trusted radial scan.

The sixth-module bridge is now partially implemented.  The public TIFF trace
audit passes, and the older multi-resolution endpoint gate accepts 13 of 15
endpoints.  An attention-based sequence regressor estimates \(q_{ij}\) without
Savitzky--Golay smoothing on synthetic boundary histories (RMSE \(0.0388\),
cyclic relative \(L_2=0.127\)), and radial-filtration TDA features classify the
15 endpoint traces with accuracy \(1.0\) while preserving the \(83.6\%\) public
TIFF label audit.  The tiny resolution-invariant ViT smoke probe remains a
failure: after resizing 1024--4096 px endpoints to 96 px and adding balanced
synthetic training images, held-out real endpoint accuracy is \(0.667\), and
all augmented views of the failed 1024 px trace are still called stable.  We
therefore do not claim the strict ViT bridge.

\begin{figure}[t]
\centering
\includegraphics[width=\linewidth]{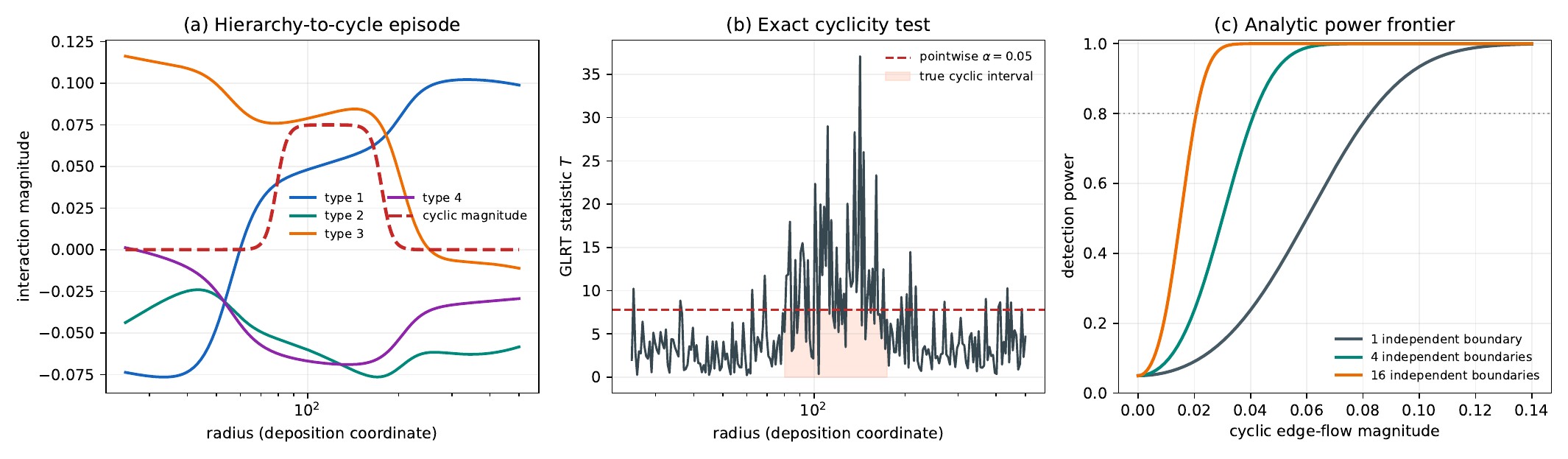}
\caption{Analytic benchmark: a hierarchy-to-cycle episode, exact cyclicity
test, and power frontier for repeated boundary observations.}
\label{fig:analytic}
\end{figure}

\begin{figure}[t]
\centering
\includegraphics[width=\linewidth]{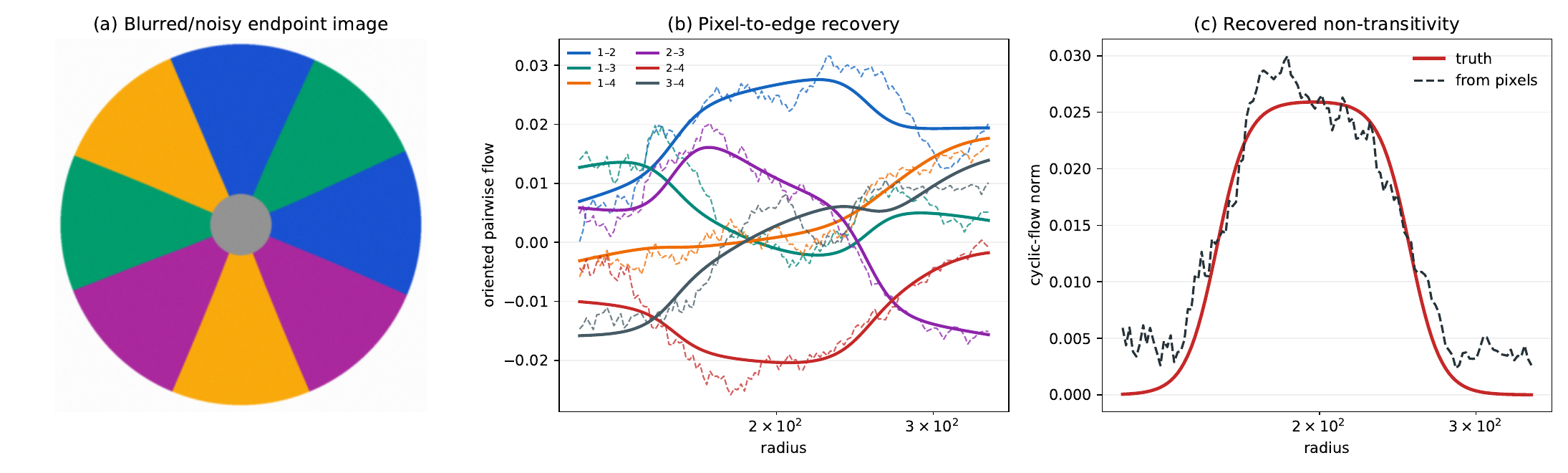}
\caption{Pixel-level endpoint round trip: blurred/noisy image, recovered
pairwise edge flow, and recovered cyclic-flow norm.}
\label{fig:pixel}
\end{figure}

\begin{figure}[t]
\centering
\includegraphics[width=\linewidth]{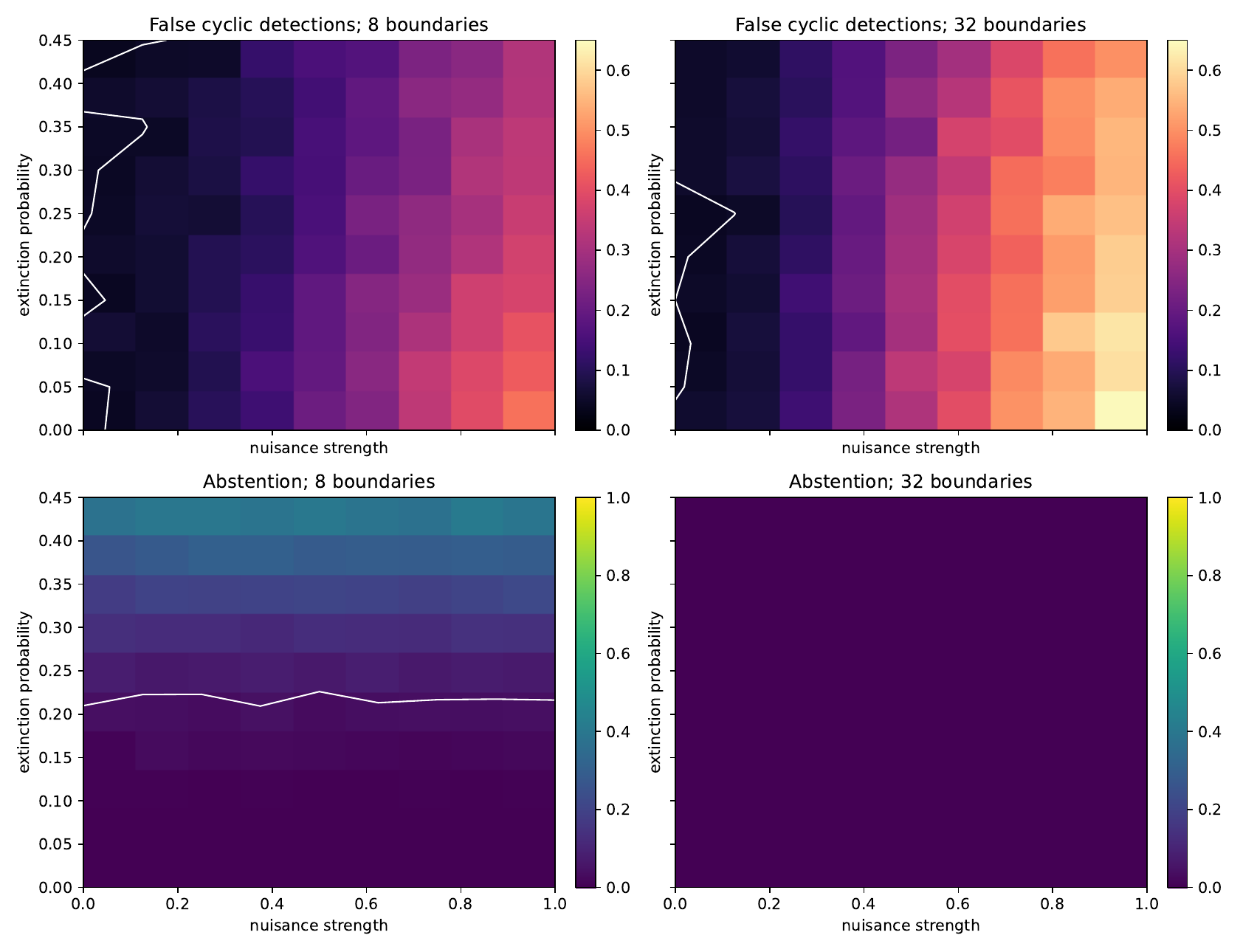}
\caption{Scalar-compatible null stress test.  The true game is always
transitive, \(q=Bf\).  High cyclic residuals in the upper panels therefore mark
frontier/imaging nuisance regimes where the method must abstain or demand
external calibration rather than infer non-transitive ecology.}
\label{fig:nullphase}
\end{figure}

\begin{figure}[t]
\centering
\includegraphics[width=\linewidth]{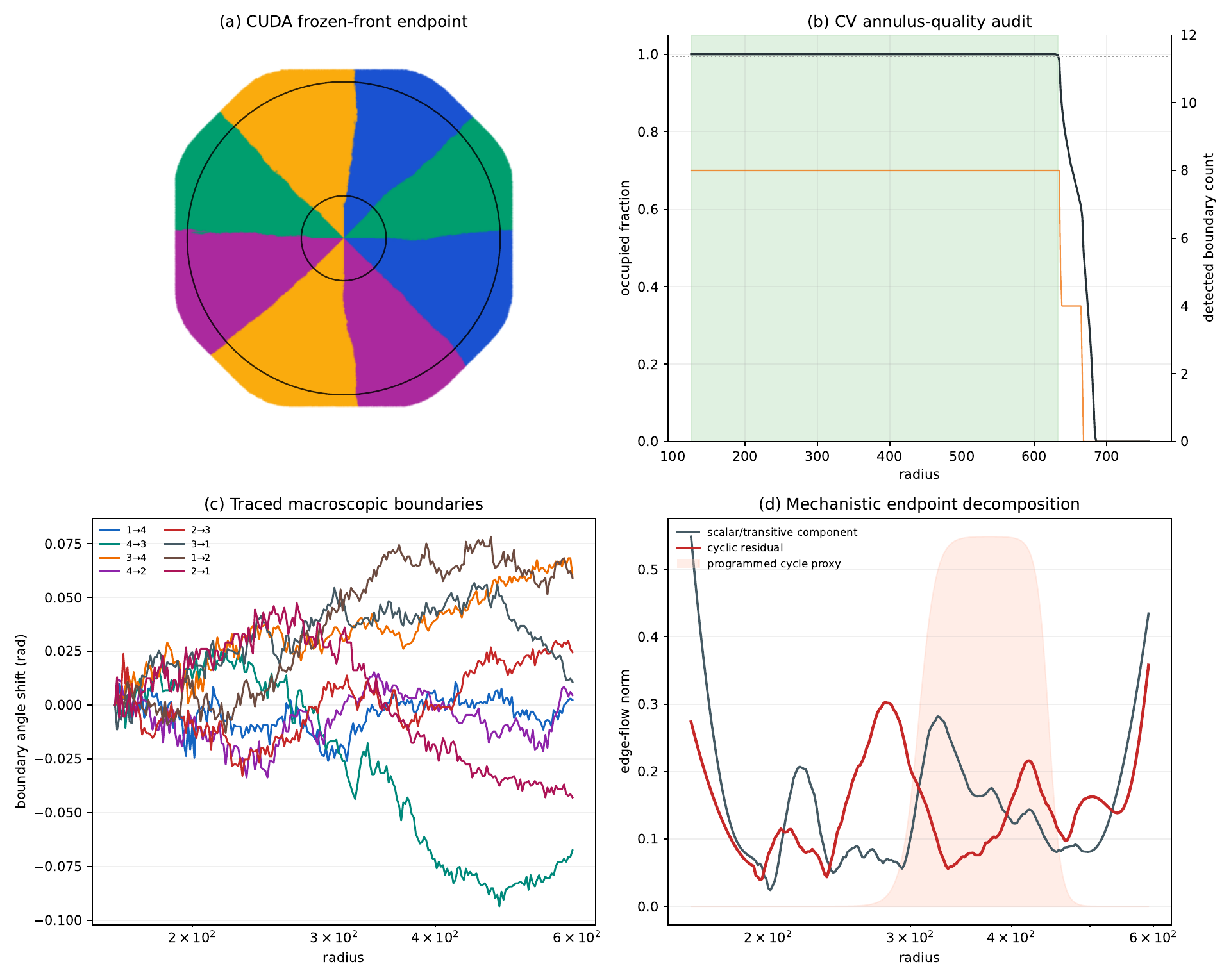}
\caption{Mechanistic endpoint analysis: raw frozen-front endpoint,
annulus-quality audit, traced boundaries, and transitive/cyclic decomposition.}
\label{fig:cuda}
\end{figure}

\begin{figure}[t]
\centering
\includegraphics[width=\linewidth]{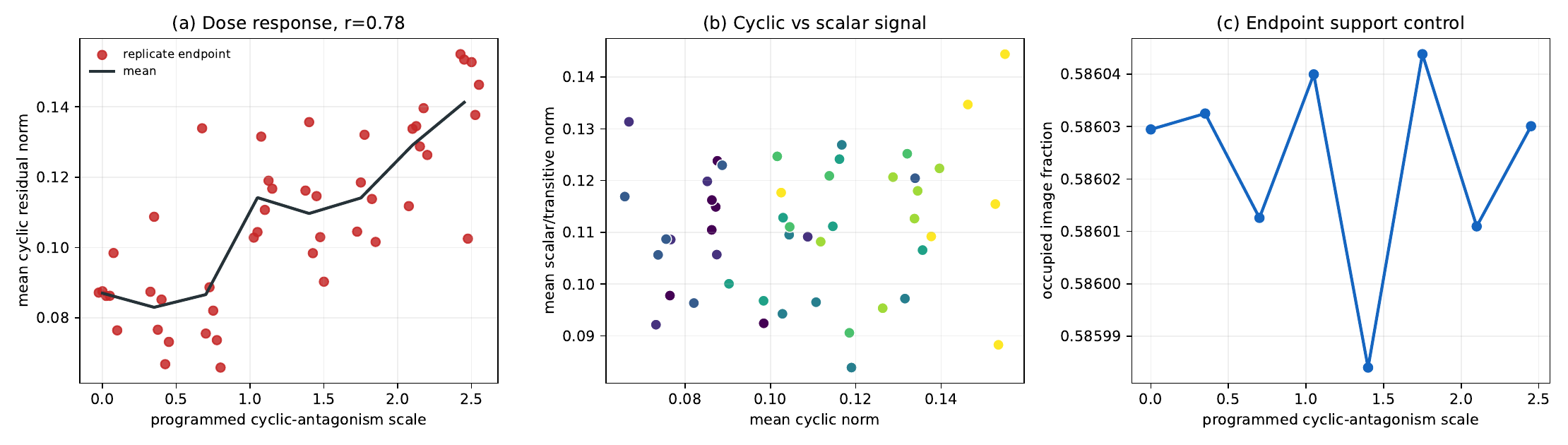}
\caption{Cyclic-antagonism dose response.  Accepted endpoint images show a
positive relationship between programmed non-transitive interaction strength and
the recovered cyclic residual, while endpoint support is stable.  Failed visual
stability gates are reported separately as abstentions.}
\label{fig:cyclesweep}
\end{figure}

\begin{figure}[t]
\centering
\includegraphics[width=\linewidth]{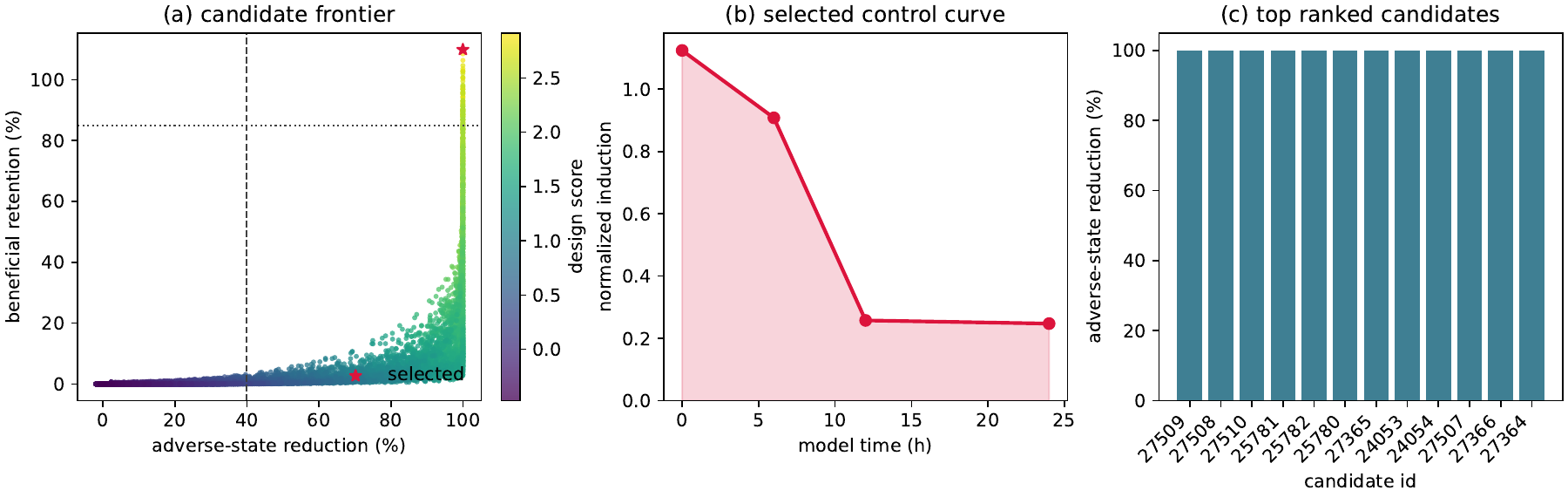}
\caption{Active design-control sweep.  The recovered cyclic residual is
used as a forcing target for a dimensionless reaction-diffusion control problem.
The selected candidate sits on the adverse-state-reduction and beneficial-retention
frontier and yields an early-pulse normalized induction curve.}
\label{fig:activecontrol}
\end{figure}

\begin{figure}[t]
\centering
\includegraphics[width=\linewidth]{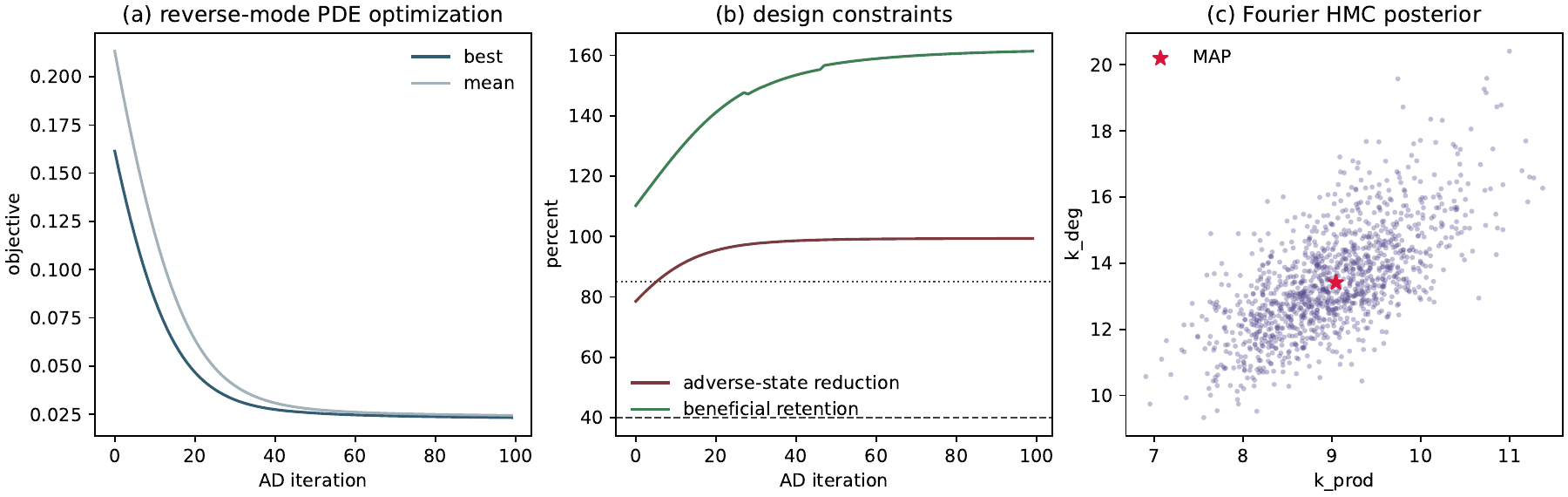}
\caption{Adjoint-state active design and Fourier HMC inversion.  Reverse-mode
automatic differentiation through the unrolled PDE reduces the objective while
maintaining the beneficial-strain constraint, and HMC estimates production and
degradation rates from the cyclic-residual spectrum.}
\label{fig:adjointhmc}
\end{figure}

\begin{figure}[t]
\centering
\includegraphics[width=\linewidth]{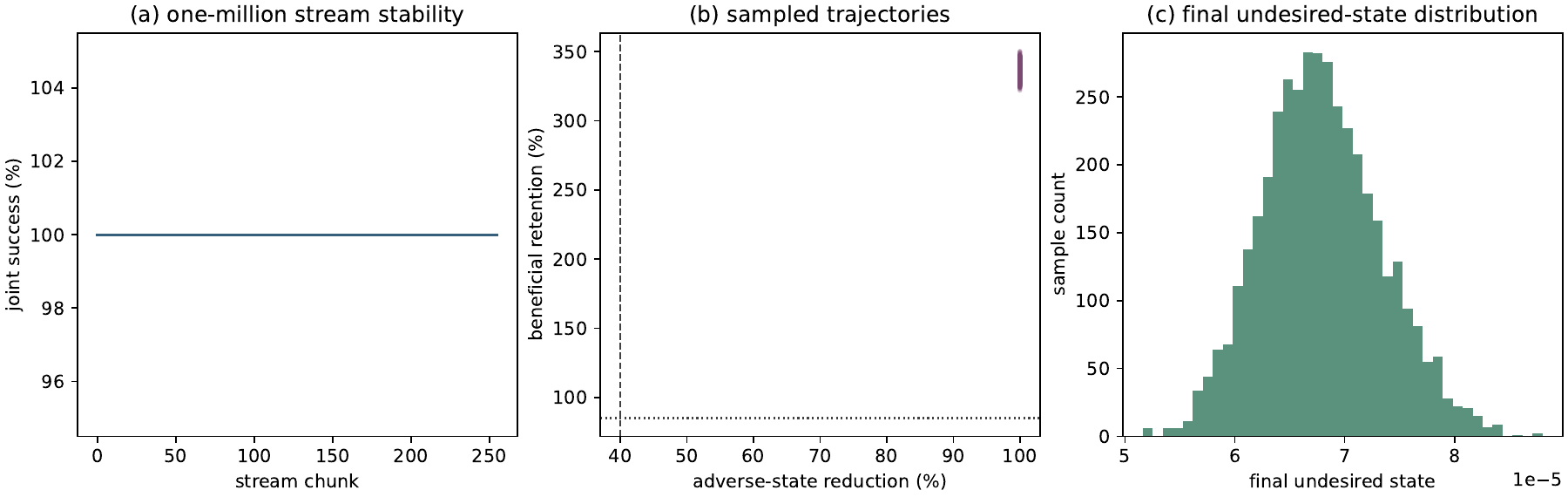}
\caption{One-million Monte Carlo active-design robustness run.  The stochastic
moment-projected trajectories test the adjoint-selected design under parameter
and process perturbations, while chunk summaries verify stable throughput and
success rates across the full stream.}
\label{fig:mccontrol}
\end{figure}

\begin{figure}[t]
\centering
\includegraphics[width=\linewidth]{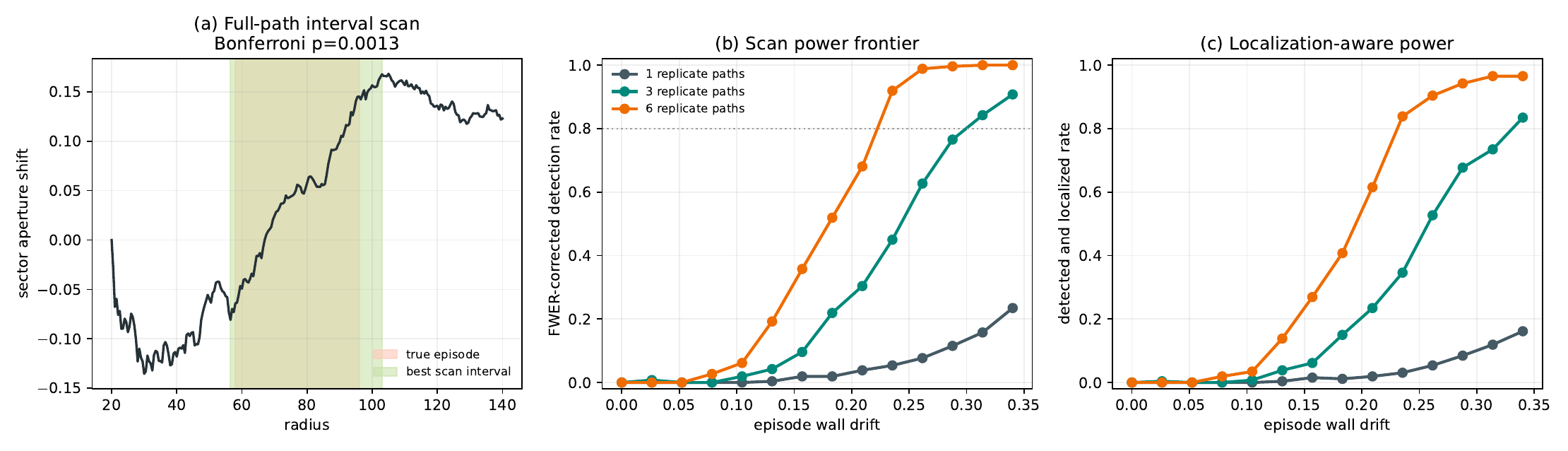}
\caption{Full-path stochastic resolution benchmark.  The interval scan uses the
entire radial boundary path and controls the scanned-interval family by a
Bonferroni correction.}
\label{fig:stochastic}
\end{figure}

\begin{figure}[t]
\centering
\includegraphics[width=\linewidth]{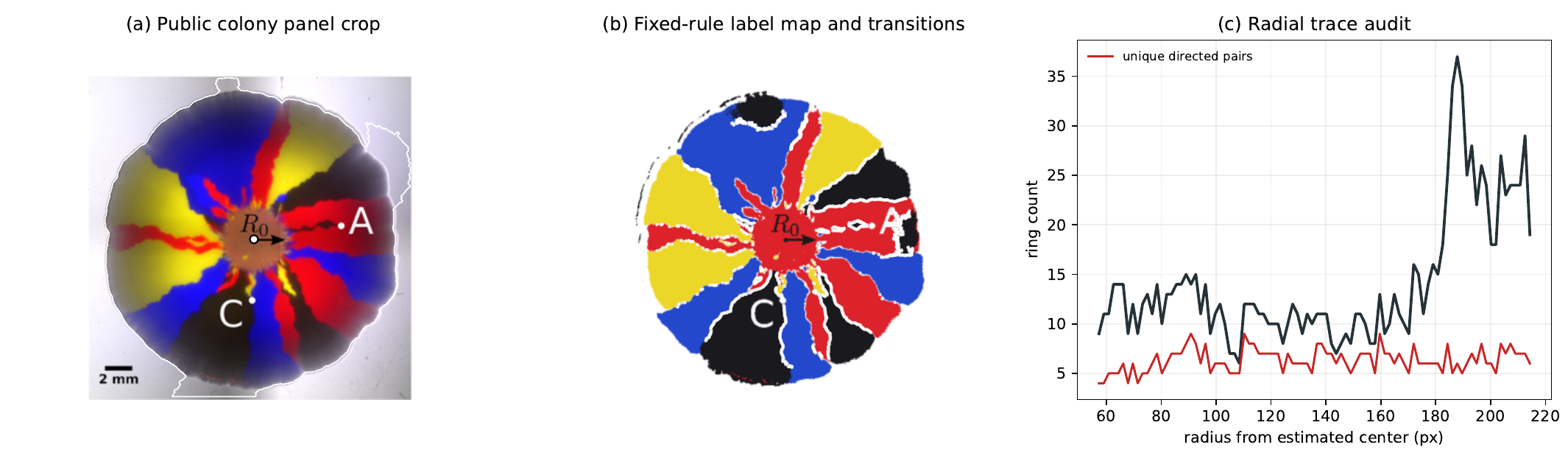}
\caption{Public-image trace audit on the left real-colony panel of a published
composite figure from Weinstein et al.  The output is a computer-vision sanity
check only: annotations and publication processing prevent treating it as raw
quantitative microscopy.}
\label{fig:publicaudit}
\end{figure}
\FloatBarrier

\section{Translation and dissemination boundary}

The biological motivation includes local antagonistic interactions in microbial
range expansions.  In this version, the computational
object is not limited to passive interpretation: cyclic residuals become inputs
to active design-control simulations over interaction matrices, strain
parameters, and admissible control schedules.  Laboratory translation still
requires independent calibration, approved host systems, institutional review,
and explicit separation between dimensionless model variables and bench
execution details.  This boundary is consistent with the broader dual-use
concerns around life-sciences dissemination
\cite{kuhlau2011precautionary,nasem2017dualuse,who2022responsible,whitehouse2025biosafety,usg2024durc}.

\section{Limitations}

The implementation assumes a known or well-estimated expansion center, stable
macroscopic sectors, visible pair contacts, and a deposition coordinate close
enough to radial order.  Topological events, overgrowth, strong chirality, front
anisotropy, post-deposition rearrangement, and sector extinction define the
abstention regime.  Public biological data enter the quantitative benchmark
only with clear provenance, license, calibration, and safety context.

\section{Conclusion}

Radial interaction tomography reframes a microbial colony endpoint image as a
classical pattern-recognition record of pairwise evolutionary interaction.  The
durable contribution is the combination of endpoint-image inversion, explicit
non-identifiability, contact-complete experiment design, stochastic cyclicity
testing, and active design-control simulation.  The benchmark now exercises the
full path from pixels to graph-Hodge residuals to dimensionless control
objectives; external colony-image validation extends that path when provenance,
calibration, and governance context are explicit.

\bibliographystyle{plain}
\bibliography{refs}

\clearpage
\appendix
\section{Proofs}

\begin{theorem}[Endpoint observation under a frozen radial sector model]
Fix an annulus \(\rho\in[a,b]\) and a known expansion center.  Suppose a labeled
endpoint image is generated by a circular sector order
\((s_1,\ldots,s_L)\) and \(C^1\) non-crossing boundary graphs
\(\theta_1(\rho)<\cdots<\theta_L(\rho)<\theta_1(\rho)+2\pi\), with no
post-deposition label transport.  Then the labeled endpoint image determines
the oriented boundary traces \(\theta_\ell(\rho)\) and hence
\(q_\ell(\rho)=d\theta_\ell/d\rho\) on every differentiability point.  If the
observed trace has uniform angular error at most \(\varepsilon\), and
\(\theta_\ell\) has second derivative bounded by \(M\), a local finite
difference estimator using samples separated by \(h\) satisfies
\[
|\widehat q_\ell(\rho)-q_\ell(\rho)|\leq 2\varepsilon/h + Mh/2
\]
away from the interval endpoints.
\end{theorem}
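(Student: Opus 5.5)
The proof has two parts: an identifiability statement, which is essentially set-theoretic, and a quantitative error bound, which is a standard finite-difference estimate.

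\textbf{Identifiability.} Fix the known center and pass to log-polar coordinates \((\rho,\theta)\in[a,b]\times\mathbb{R}/2\pi\mathbb{Z}\). Because there is no post-deposition label transport, the label observed at \((\rho,\theta)\) is the label deposited there. On each circle \(\{\rho\}\times S^1\), the frozen sector model makes the label piecewise constant: it equals \(s_\ell\) on the arc \((\theta_\ell(\rho),\theta_{\ell+1}(\rho))\), with indices cyclic and \(\theta_{L+1}=\theta_1+2\pi\).

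The set of label discontinuities on that circle is therefore exactly \(\{\theta_\ell(\rho)\}\). The strict inequalities \(\theta_1<\cdots<\theta_L<\theta_1+2\pi\) make these \(L\) points distinct. We assume, as is implicit in the sector order, that consecutive sectors carry distinct labels, \(s_\ell\neq s_{\ell+1}\), so that every boundary is visible as a jump.

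Each jump point is then labeled by its ordered pair \((s_\ell,s_{\ell+1})\) read counterclockwise, which gives the orientation. Continuity of the \(\theta_\ell\) in \(\rho\), together with the non-crossing condition, means the cyclic order cannot change across \([a,b]\). Hence the identity of each trace is determined from one reference radius by following continuity. The only remaining ambiguity is the global \(2\pi\) lift, which we fix by unwrapping continuously. This determines each \(\theta_\ell\), and at every point where \(\theta_\ell\) is differentiable, \(q_\ell=\theta_\ell'\) is determined as a derivative of an already-determined function.

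\textbf{Error bound.} Take the symmetric estimator
\[
\widehat q_\ell(\rho)=\frac{\widehat\theta_\ell(\rho+h)-\widehat\theta_\ell(\rho-h)}{2h},
\]
or a forward difference with spacing \(h\); we check which choice reproduces the stated constants. Write \(\widehat\theta=\theta+\eta\) with \(|\eta|\le\varepsilon\), and split the error into a noise term and a truncation term.

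\begin{itemize}
\item \emph{Noise term.} For the forward difference, the noise contribution is \((\eta(\rho+h)-\eta(\rho))/h\), which is bounded by \(2\varepsilon/h\).
\item \emph{Truncation term.} By Taylor's theorem with Lagrange remainder, using \(|\theta''|\le M\), we get \((\theta(\rho+h)-\theta(\rho))/h-\theta'(\rho)=\theta''(\xi)h/2\). This is bounded by \(Mh/2\).
\end{itemize}

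Adding the two gives exactly \(2\varepsilon/h+Mh/2\). The central difference over spacing \(2h\) gives the same bound or a better one, namely \(\varepsilon/h+Mh/2\). So we take the forward (or backward) difference at separation \(h\) as the estimator. ``Away from the interval endpoints'' just means that \([\rho,\rho+h]\subset[a,b]\), or \([\rho-h,\rho]\subset[a,b]\) for the backward version, so that the samples exist and the Taylor remainder applies. The bound needs \(\theta_\ell\in C^2\) on that subinterval, or at least \(\theta_\ell'\) Lipschitz with constant \(M\), which is what the hypothesis "second derivative bounded by \(M\)" provides.

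\textbf{Main obstacle.} The analytic bound is routine. The delicate step is the identifiability argument: showing that the traces are tracked without relabeling. That requires that adjacent labels differ and that the \(C^1\) non-crossing graphs are uniformly separated on the compact annulus; if either fails, the abstention cases in the Limitations section apply. To handle separation, we use compactness: the continuous gaps \(\theta_{\ell+1}-\theta_\ell\) attain a positive minimum \(\delta\) on \([a,b]\). Then, for an observed trace with \(\varepsilon<\delta/2\), the association of observed boundaries to true boundaries is also unambiguous, which justifies applying the error bound to the correct \(\ell\).
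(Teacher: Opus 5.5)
Your proposal is correct and follows the paper's proof essentially step for step. For identifiability, you read the boundary angles as the discontinuities of the piecewise-constant circular label function and extend them across radii by continuity and the non-crossing condition; for the error bound, you use the same one-sided difference, split into a $2\varepsilon/h$ noise term and an $Mh/2$ Taylor term. Your extra remarks are correct refinements of points the paper leaves implicit or mentions only in passing: adjacent sector labels must differ, the gaps attain a positive minimum on the compact annulus (which makes the association unambiguous under noise), and a centered difference at spacing $2h$ gives $\varepsilon/h+Mh/2$.
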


\begin{proof}
At a fixed \(\rho\), the circle intersects the sector boundaries exactly at the
ordered angles
\[
\theta_1(\rho),\ldots,\theta_L(\rho).
\]
Between consecutive boundary angles the label is constant and equal to the
corresponding sector type.  Therefore the discontinuities of the circular label
function are exactly
the boundary angles, with their oriented adjacent label pairs.  Because the
graphs are non-crossing and \(C^1\), ordering at one radius uniquely associates
each discontinuity with the same boundary at all nearby radii; continuation over
\([a,b]\) gives the full traces.  Differentiability of \(C^1\) traces gives
\(q_\ell=d\theta_\ell/d\rho\).

For the stability bound, let the measured trace be
\(\widehat\theta(\rho)=\theta(\rho)+e(\rho)\) with \(|e|\leq\varepsilon\).  The
one-sided estimator is
\[
\widehat q(\rho)=\frac{\widehat\theta(\rho+h)-\widehat\theta(\rho)}{h}.
\]
The error from measurement perturbation is at most
\((|e(\rho+h)|+|e(\rho)|)/h\leq2\varepsilon/h\).  Taylor's theorem gives
truncation error at most \(Mh/2\).  Adding the two terms proves the stated
bound.  Centered differences improve constants but are not needed for the
identifiability statement.
\end{proof}

\begin{theorem}[Weighted transitive/cyclic decomposition]
Let \(G=(V,E)\) be connected, \(B\in\mathbb{R}^{|E|\times |V|}\) its oriented
incidence matrix, and \(W\) a positive diagonal matrix.  Every edge-flow vector
\(q\) has a unique decomposition
\[
q=Bf+c,\qquad B^\top Wc=0,\qquad \mathbf 1^\top f=0.
\]
The cyclic subspace has dimension \(|E|-|V|+1\).
\end{theorem}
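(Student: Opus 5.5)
The plan is to treat the decomposition as a weighted orthogonal projection in \(\mathbb{R}^{|E|}\) with inner product \(\langle x,y\rangle_W=x^\top W y\). Since \(W\) is positive diagonal, this is a genuine inner product. The conditions then read: \(Bf\in\operatorname{im}B\), and \(c\) is \(W\)-orthogonal to \(\operatorname{im}B\), because \(B^\top W c=0\) says exactly \(\langle Bg,c\rangle_W=0\) for all \(g\). So \(\mathbb{R}^{|E|}=\operatorname{im}B\oplus_W(\operatorname{im}B)^{\perp_W}\), and the orthogonal-decomposition theorem gives a unique split \(q=p+c\) with \(p\in\operatorname{im}B\) and \(B^\top W c=0\). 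Concretely, \(p=Bf\) where \(f\) solves the weighted normal equations \(B^\top W B f=B^\top W q\); these are the first-order conditions of minimizing \(\|q-Bf\|_W^2\), which matches the phrase ``closest scalar hierarchy.''

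The remaining task is to pass from uniqueness of \(p\) to uniqueness of \(f\). For this I will use that \(\ker B=\operatorname{span}\{\mathbf 1\}\) when \(G\) is connected. Each row of \(B\) is \(e_i^\top-e_j^\top\) for an edge \((i,j)\), so \(\mathbf 1\in\ker B\). Conversely, \(Bg=0\) forces \(g_i=g_j\) along every edge, and connectivity propagates this equality to all vertices. Hence any two preimages of \(p\) differ by a multiple of \(\mathbf 1\), and the normalization \(\mathbf 1^\top f=0\) selects exactly one. Existence of \(f\) holds because \(p\in\operatorname{im}B\); equivalently, \(B^\top W B\) is a weighted Laplacian whose kernel is \(\operatorname{span}\{\mathbf 1\}\) and whose range is \(\mathbf 1^\perp\), which contains \(B^\top W q\) since \(\mathbf 1^\top B^\top=0\). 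So the solution can also be written \(f=(B^\top W B)^{+}B^\top W q\).

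For the dimension count, rank–nullity gives \(\operatorname{rank}B=|V|-\dim\ker B=|V|-1\). The cyclic subspace \(\{c: B^\top W c=0\}=\ker(B^\top W)\) has dimension \(|E|-\operatorname{rank}(B^\top W)\). Since \(W\) is invertible, \(\operatorname{rank}(B^\top W)=\operatorname{rank}B^\top=|V|-1\), so the cyclic subspace has dimension \(|E|-|V|+1\). Equivalently, it is \(W^{-1}\ker B^\top\), the image of the ordinary cycle space under \(W^{-1}\).

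I do not expect a real obstacle here. The only point needing care is the kernel characterization \(\ker B=\operatorname{span}\{\mathbf 1\}\), which is where connectivity enters. Without it, each connected component adds a kernel direction, the single normalization \(\mathbf 1^\top f=0\) no longer pins down \(f\), and the dimension becomes \(|E|-|V|+\#\text{components}\). A second point to keep explicit is that \(c\) is weighted-orthogonal, not Euclidean-orthogonal, to the gradient space. For that reason the dimension argument should go through the invertibility of \(W\) rather than be read off the Euclidean cycle space directly.
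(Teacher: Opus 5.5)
Your proposal is correct and follows essentially the same route as the paper. Both arguments treat the split as the $W$-weighted least-squares projection of $q$ onto $\mathrm{im}\,B$ (normal equations $B^\top W(q-Bf)=0$), use connectivity to get $\ker B=\mathrm{span}\{\mathbf 1\}$ so that the gauge $\mathbf 1^\top f=0$ pins down $f$, and get $|E|-|V|+1$ from rank--nullity; the differences are only in ordering (you start from the orthogonal-decomposition theorem, the paper from strict convexity of the gauge-fixed problem) and in your explicit proof of the kernel characterization and of the step from uniqueness of $Bf$ to uniqueness of $f$, which the paper leaves implicit.
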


\begin{proof}
Consider the strictly convex least-squares problem on the gauge-fixed subspace
\(\mathbf 1^\top f=0\):
\[
\min_f (q-Bf)^\top W(q-Bf).
\]
Since \(G\) is connected, \(\mathrm{rank}(B)=|V|-1\), and \(B\) is injective on
the gauge-fixed subspace.  Hence the minimizer is unique.  The normal equations
are \(B^\top W(q-Bf)=0\).  Setting \(c=q-Bf\) gives the decomposition and the
orthogonality condition.  If two decompositions existed, their difference would
be both a gradient and \(W\)-orthogonal to every gradient, hence would have zero
weighted norm; uniqueness follows.  Finally, rank-nullity gives
\(\dim(\mathrm{im}\,B) = |V|-1\), so the \(W\)-orthogonal complement inside
\(\mathbb{R}^{|E|}\) has dimension \(|E|-|V|+1\).
\end{proof}

\begin{lemma}[Cycle certificates for scalar incompatibility]
Let \(Z\) be any signed cycle-basis matrix for \(G\): each row records the
oriented edge incidences of one fundamental cycle.  Then an edge flow \(q\) is
scalar-compatible if and only if \(Zq=0\).  Therefore the cyclic residual in
the weighted decomposition is zero if and only if every directed cycle sum
vanishes.
\end{lemma}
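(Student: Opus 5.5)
The plan is to identify \(\ker Z\) with \(\mathrm{im}\,B\) by a containment-plus-dimension argument, and then read off the statement about the cyclic residual from the weighted decomposition theorem.

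\emph{Step 1: \(\mathrm{im}\,B\subseteq\ker Z\).} Take a row \(z\) of \(Z\), recording a closed oriented walk \(v_0\to v_1\to\cdots\to v_k=v_0\). Under the incidence convention, each oriented edge \((i,j)\) contributes \(\pm(f_j-f_i)\). The sign in \(z\) matches traversal direction to edge orientation, so \(z^\top Bf\) is a telescoping sum \(\sum_m (f_{v_{m+1}}-f_{v_m})=0\). Hence \(ZB=0\), and every scalar-compatible \(q=Bf\) satisfies \(Zq=0\).

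\emph{Step 2: \(\ker Z\subseteq\mathrm{im}\,B\).} I would use dimensions. Fix a spanning tree \(T\) (which exists since \(G\) is connected). The fundamental cycles are indexed by the \(|E|-|V|+1\) non-tree edges, and each contains exactly one non-tree edge. So \(Z\) restricted to the non-tree columns is a signed identity, and \(\mathrm{rank}\,Z=|E|-|V|+1\). Then \(\dim\ker Z=|V|-1=\mathrm{rank}\,B\), using the rank fact already invoked in the proof of the weighted decomposition theorem. Combined with Step 1, this gives \(\ker Z=\mathrm{im}\,B\).

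For a general cycle basis (not necessarily fundamental), I would note that "cycle basis" means its rows span the same cycle space as a fundamental basis. Then \(Z=RZ_T\) with \(R\) invertible, so the kernel is unchanged. As a constructive alternative, I could define \(f\) on vertices by integrating \(q\) along tree paths from a root. Each non-tree edge then satisfies \(q_e=(Bf)_e\) exactly when its fundamental cycle sum vanishes.

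\emph{Step 3: the cyclic residual.} By the weighted decomposition theorem, \(q=Bf+c\) with \(c\) \(W\)-orthogonal to \(\mathrm{im}\,B\). If \(c=0\), then \(q\in\mathrm{im}\,B\), and Step 1 gives \(Zq=0\). Conversely, if \(Zq=0\), then \(q=Bg\) for some \(g\). Gauge-fix \(g\) so that \(\mathbf 1^\top g=0\); then \(q=Bg+0\) is a valid decomposition, and uniqueness forces \(c=0\). Finally, every directed cycle is a signed sum of fundamental cycles, so "all directed cycle sums vanish" is equivalent to \(Zq=0\).

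The main obstacle is Step 2's rank claim for an arbitrary cycle basis. I would handle it by reducing to fundamental cycles through the span argument above, stating explicitly that "cycle basis" means a basis of \(\ker B^\top\). Under that reading, \(\ker Z=(\ker B^\top)^\perp=\mathrm{im}\,B\) follows immediately by orthogonal complements, which is the cleanest route.
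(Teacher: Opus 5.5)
Your proposal is correct and follows essentially the same route as the paper. Telescoping gives $\mathrm{im}\,B\subseteq\ker Z$, and the dimension count $\dim\ker Z=|E|-(|E|-|V|+1)=|V|-1=\dim\mathrm{im}\,B$ forces equality, after which $c=0$ holds exactly when $q\in\mathrm{im}\,B$. Your additions fill in steps the paper passes over quickly: the spanning-tree argument (a signed identity on the non-tree columns) justifies the rank of $Z$, which the paper only asserts, and the appeals to uniqueness of the decomposition and to fundamental cycles spanning every directed cycle cover the lemma's final sentence.
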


\begin{proof}
For any node potential \(f\), the sum of edge differences around a closed cycle
telescopes, so \(ZBf=0\).  Hence \(\mathrm{im}(B)\subseteq\ker(Z)\).  A cycle
basis has \(|E|-|V|+1\) independent rows, so
\[
\dim\ker(Z)=|E|-(|E|-|V|+1)=|V|-1.
\]
Because \(G\) is connected, \(\dim\mathrm{im}(B)=|V|-1\).  The inclusion is
therefore equality: \(\mathrm{im}(B)=\ker(Z)\).  Thus \(Zq=0\) exactly when
\(q=Bf\) for some potential \(f\).  In the weighted decomposition, \(c=0\) is
equivalent to \(q=Bf\), so the same condition is equivalent to zero cyclic
residual.
\end{proof}

\begin{proof}[Proof of radial clock non-identifiability]
The endpoint trace records the accumulated boundary angle
\(\theta(\rho)=\theta(a)+\int_a^\rho q(u)\,du\).  Let \(t=T(\rho)\) be any
strictly increasing differentiable physical clock and let
\(\tilde t=h(t)\) for another increasing differentiable bijection \(h\).  The
rate in the new clock is
\[
\tilde a(\tilde t)=a(h^{-1}(\tilde t))\,\frac{d h^{-1}}{d\tilde t}.
\]
Changing variables in the integral shows
\(\int a(t)\,dt=\int \tilde a(\tilde t)\,d\tilde t\).  Thus the same endpoint
trace is produced by infinitely many physical clocks.  No endpoint-only
algorithm can choose among them.
\end{proof}

\begin{proof}[Proof of the minimum complete-contact design theorem]
A circular inoculation order is a closed walk on the genotype contact graph:
each boundary between adjacent sectors traverses one edge.  To observe an
unrestricted antisymmetric pairwise game, the walk must cover every edge of
\(K_n\).  If \(n\) is odd, every vertex of \(K_n\) has degree \(n-1\), which is
even, so \(K_n\) is Eulerian.  An Euler circuit covers each edge once and has
\(\binom n2\) boundaries; no shorter walk can cover all \(\binom n2\) edges.

If \(n\) is even, every vertex of \(K_n\) has odd degree.  Any closed walk has
even degree at every vertex in the multigraph of traversed edges.  Therefore the
walk must duplicate edges so that all \(n\) odd vertices become even.  One
duplicated edge changes parity at two vertices, so at least \(n/2\) duplicated
edges are necessary.  A perfect matching supplies exactly \(n/2\) duplicated
edges, after which all degrees are even and an Euler circuit exists in the
resulting multigraph.  The length is therefore \(\binom n2+n/2\), and the lower
bound is attained.
\end{proof}

\begin{theorem}[Exact Gaussian cyclicity test]
Suppose \(\widehat q\sim\mathcal N(Bf+c,\Sigma)\), where \(\Sigma\) is positive
definite and \(G\) is connected.  Testing \(H_0:c=0\) with
\[
T=\min_x(\widehat q-Bx)^\top\Sigma^{-1}(\widehat q-Bx)
\]
gives \(T\sim\chi^2_{|E|-|V|+1}\) under \(H_0\).  Under a fixed cyclic
alternative \(c\), \(T\) is noncentral chi-squared with noncentrality
\(\delta=c^\top\Sigma^{-1}c\).
\end{theorem}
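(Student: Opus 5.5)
The plan is to whiten the observation and reduce $T$ to the squared norm of a Gaussian vector projected onto a fixed subspace. Set $\Sigma^{-1/2}$ to be the symmetric positive definite inverse square root. Define $y=\Sigma^{-1/2}\widehat q$ and $A=\Sigma^{-1/2}B$. Then
\[
T=\min_x\|y-Ax\|^2=\|(I-P)y\|^2,
\]
where $P$ is the Euclidean orthogonal projector onto $\mathrm{im}(A)$. The minimum is attained because the objective is a convex quadratic bounded below. Its value is the squared distance from $y$ to $\mathrm{im}(A)$, whether or not the minimizer $x$ is unique; non-uniqueness comes only from the gauge direction $\mathbf 1$. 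Since $\Sigma^{-1/2}$ is invertible, $\mathrm{rank}(A)=\mathrm{rank}(B)=|V|-1$ for connected $G$, as in the proof of the weighted decomposition theorem. Hence $I-P$ is a symmetric idempotent of rank $|E|-(|V|-1)=|E|-|V|+1$.

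Next I would compute the distribution. We have $y\sim\mathcal N(\Sigma^{-1/2}(Bf+c),I)$. Because $(I-P)\Sigma^{-1/2}Bf=(I-P)Af=0$, the potential part is annihilated. Thus $(I-P)y$ is a standard Gaussian vector projected onto a $k$-dimensional subspace, with $k=|E|-|V|+1$, shifted by the mean $\mu=(I-P)\Sigma^{-1/2}c$.

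Choose an orthonormal basis $U\in\mathbb R^{|E|\times k}$ of $\mathrm{im}(I-P)$. Then $T=\|U^\top y\|^2$ and $U^\top y\sim\mathcal N(U^\top\Sigma^{-1/2}c,I_k)$. Under $H_0$ this gives $T\sim\chi^2_k$ exactly. Under an alternative, $T$ is noncentral $\chi^2_k$ with noncentrality $\|\mu\|^2$.

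The remaining step, and the only delicate one, is identifying $\|\mu\|^2$ with $c^\top\Sigma^{-1}c$. In general
\[
\|(I-P)\Sigma^{-1/2}c\|^2=c^\top\Sigma^{-1}c-\|P\Sigma^{-1/2}c\|^2 .
\]
So the claim holds exactly when $P\Sigma^{-1/2}c=0$, that is, when $A^\top\Sigma^{-1/2}c=B^\top\Sigma^{-1}c=0$. This is the precision-orthogonality condition of the weighted decomposition theorem, with $W=\Sigma^{-1}$. The theorem there is stated for diagonal $W$, but its least-squares proof uses only positive definiteness, so it extends verbatim to any positive definite $W$.

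I would therefore interpret "a cyclic alternative $c$" as the cyclic component in the unique $\Sigma^{-1}$-weighted decomposition $q=Bf+c$, $B^\top\Sigma^{-1}c=0$. This is always possible without loss of generality: any $c$ can be split into its gradient part, which is absorbed into $f$, and a $\Sigma^{-1}$-orthogonal part. Under that convention, $\delta=c^\top\Sigma^{-1}c$ follows at once.

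I expect this normalization to be the main obstacle. For an arbitrary nonzero $c$ not in this normal form, the correct noncentrality is the $\Sigma^{-1}$-norm of its residual after projecting out $\mathrm{im}(B)$. I would state that remark explicitly so the theorem is read with the intended gauge.
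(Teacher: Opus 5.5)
Your proof is correct and follows the paper's route. Like the paper, you whiten the observation, identify $T$ as the squared norm of the projection of $y$ onto $\mathrm{im}(\Sigma^{-1/2}B)^\perp$, and use $\mathrm{rank}(B)=|V|-1$ to get $k=|E|-|V|+1$ degrees of freedom. Your explicit normalization $B^\top\Sigma^{-1}c=0$ is exactly what the paper's one-line claim that ``the projected mean has squared norm $c^\top\Sigma^{-1}c$'' assumes without saying, so your version is the more careful statement of the same argument.
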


\begin{proof}
Whiten the observation: \(y=\Sigma^{-1/2}\widehat q\) and
\(\widetilde B=\Sigma^{-1/2}B\).  Under \(H_0\), \(y\) is a unit-covariance
Gaussian with mean in \(\mathrm{im}(\widetilde B)\).  The statistic \(T\) is the
squared norm of the orthogonal projection of \(y\) onto
\(\mathrm{im}(\widetilde B)^\perp\).  Since \(G\) is connected,
\(\mathrm{rank}(\widetilde B)=|V|-1\), so the orthogonal complement has
dimension \(|E|-|V|+1\).  A squared norm of that many independent standard
normal coordinates is chi-squared.  With cyclic mean \(c\), the projected mean
has squared norm \(c^\top\Sigma^{-1}c\), giving the stated noncentral
chi-squared law.
\end{proof}

\begin{corollary}[Replicate power scaling]
Suppose \(R\) independent replicate edge-flow estimates satisfy
\(\widehat q_r\sim\mathcal N(Bf+c,\Sigma)\) with common covariance.  The
pooled cyclicity test has the same null law
\(\chi^2_{|E|-|V|+1}\) and noncentrality
\[
\delta_R = R\,c^\top\Sigma^{-1}c .
\]
At level \(\alpha\), its power is
\[
1-F_{\chi^2_k(\delta_R)}\left(\chi^2_{k,1-\alpha}\right),
\qquad k=|E|-|V|+1,
\]
where \(F_{\chi^2_k(\delta_R)}\) is the noncentral chi-squared CDF.
\end{corollary}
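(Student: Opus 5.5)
The plan is to reduce the replicate problem to the single-observation Exact Gaussian cyclicity test by sufficiency. First I define the pooled estimate. Since the \(\widehat q_r\) are independent with common mean \(\mu=Bf+c\) and common covariance \(\Sigma\), the sample mean
\[
\bar q=\frac1R\sum_{r=1}^R\widehat q_r
\]
satisfies \(\bar q\sim\mathcal N(Bf+c,\Sigma/R)\). The pooled test statistic is then
\[
T_R=\min_x(\bar q-Bx)^\top(\Sigma/R)^{-1}(\bar q-Bx).
\]
To justify calling this \emph{the} pooled test, I will check that it is the generalized likelihood-ratio statistic for the full replicate sample. Expanding
\[
\sum_r(\widehat q_r-Bx)^\top\Sigma^{-1}(\widehat q_r-Bx)
=\sum_r(\widehat q_r-\bar q)^\top\Sigma^{-1}(\widehat q_r-\bar q)+R(\bar q-Bx)^\top\Sigma^{-1}(\bar q-Bx).
\]
The first term does not depend on \(x\). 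The second term, taken at the unrestricted optimum \(\mu=\bar q\), vanishes. So the difference between the restricted and unrestricted minima is exactly \(T_R\).

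Next I apply the Exact Gaussian cyclicity test with \(\Sigma\) replaced by \(\Sigma_R=\Sigma/R\). This matrix is positive definite, and \(G\) is connected. Under \(H_0\) the theorem gives \(T_R\sim\chi^2_k\) with \(k=|E|-|V|+1\), which is the claimed null law. Under the fixed cyclic alternative it gives noncentrality
\[
c^\top\Sigma_R^{-1}c=R\,c^\top\Sigma^{-1}c=\delta_R .
\]
One detail needs care. The theorem's noncentrality uses \(c\) itself, not the projection of \(c\) onto the \(\Sigma_R^{-1}\)-orthogonal complement of \(\mathrm{im}(B)\). That is legitimate only if \(c\) is \(\Sigma^{-1}\)-orthogonal to the gradients, i.e.\ \(B^\top\Sigma^{-1}c=0\).

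This orthogonality is the one point where I expect some friction, and I will adopt it as the standing convention for a "cyclic alternative". The reason it transfers cleanly is that \(B^\top(\Sigma/R)^{-1}c=R\,B^\top\Sigma^{-1}c\), so orthogonality in the \(\Sigma^{-1}\) metric is the same as orthogonality in the \(\Sigma_R^{-1}\) metric. Without this convention, \(c\) in the formula should be read as its projected cyclic component, and the scaling by \(R\) still holds for the same reason.

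Finally, the power statement follows directly. The level-\(\alpha\) test rejects when \(T_R>\chi^2_{k,1-\alpha}\), which has size \(\alpha\) by the null law. Under the alternative \(T_R\sim\chi^2_k(\delta_R)\), so the rejection probability is \(1-F_{\chi^2_k(\delta_R)}(\chi^2_{k,1-\alpha})\). As a remark, this power is increasing in \(\delta_R\) because the noncentral chi-squared family is stochastically monotone in its noncentrality. Hence power grows with \(R\), which is the scaling the corollary is meant to express.
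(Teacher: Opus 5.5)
Your proposal is correct and follows the paper's proof: you average the replicates to get \(\bar q\sim\mathcal N(Bf+c,\Sigma/R)\), apply the exact Gaussian cyclicity theorem with covariance \(\Sigma/R\) to obtain noncentrality \(R\,c^\top\Sigma^{-1}c\), and read off the power as the noncentral upper tail above \(\chi^2_{k,1-\alpha}\). You also add two points the paper leaves implicit: the sufficiency/GLR check that \(T_R\) is the pooled likelihood-ratio statistic, and the remark that writing the noncentrality as \(c^\top\Sigma^{-1}c\) presumes \(B^\top\Sigma^{-1}c=0\), a convention that is preserved under \(\Sigma\mapsto\Sigma/R\).
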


\begin{proof}
The replicate mean has distribution
\(\overline q\sim\mathcal N(Bf+c,\Sigma/R)\).  Applying the exact Gaussian
cyclicity theorem with covariance \(\Sigma/R\) gives the same residual
dimension \(k\) and noncentrality
\(c^\top(R\Sigma^{-1})c=R\,c^\top\Sigma^{-1}c\).  The stated power is the upper
tail probability above the central chi-squared level-\(\alpha\) threshold.
\end{proof}

\begin{theorem}[Full-path constant drift information]
For aperture increments
\[
\Delta\phi_k\sim\mathcal N\left(2w\,\frac{\Delta r_k}{m_k},
4D\,\frac{\Delta r_k}{m_k^2}\right),
\]
where \(m_k=(r_k+r_{k+1})/2\), the maximum-likelihood estimator of constant
wall drift \(w\) has Fisher information
\[
I(w)=\sum_k \frac{(2\Delta r_k/m_k)^2}{4D\Delta r_k/m_k^2}
=\frac{r_N-r_0}{D}.
\]
Thus \(\mathrm{se}(\widehat w)=\sqrt{D/(r_N-r_0)}\).
\end{theorem}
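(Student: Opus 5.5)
The plan is to treat the aperture increments \(\Delta\phi_0,\ldots,\Delta\phi_{N-1}\) as independent Gaussian observations. Their means are linear in the single unknown \(w\), and their variances do not depend on \(w\). I take independence to be the intended reading of a ``full-path'' model, since disjoint radial intervals carry independent Brownian increments of the wall. I also treat \(D\) as known. Under these assumptions the problem is a one-parameter Gaussian linear model \(\Delta\phi_k=a_k w+\sigma_k\xi_k\), with \(a_k=2\Delta r_k/m_k\), \(\sigma_k^2=4D\Delta r_k/m_k^2\), and \(\xi_k\) i.i.d.\ standard normal. Every claim should follow from the standard formulas for such a model.

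\textbf{Step 1 (Fisher information).} Write the log-likelihood
\[
\ell(w)=-\sum_k \frac{(\Delta\phi_k-a_k w)^2}{2\sigma_k^2}+\text{const}.
\]
Then \(-\ell''(w)=\sum_k a_k^2/\sigma_k^2\) deterministically, so \(I(w)\) is this sum and is constant in \(w\). This is exactly the displayed expression in the statement.

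\textbf{Step 2 (closed form).} Simplify each term:
\[
\frac{(2\Delta r_k/m_k)^2}{4D\Delta r_k/m_k^2}=\frac{\Delta r_k}{D}.
\]
The factors \(m_k^2\) cancel, so the choice of midpoint \(m_k\) never matters. Summing with \(\Delta r_k=r_{k+1}-r_k\) telescopes to \((r_N-r_0)/D\).

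\textbf{Step 3 (MLE and standard error).} Setting \(\ell'(w)=0\) gives the weighted least-squares estimator
\[
\widehat w=\frac{\sum_k a_k\Delta\phi_k/\sigma_k^2}{\sum_k a_k^2/\sigma_k^2}.
\]
This is a linear combination of independent Gaussians, unbiased, with variance exactly \(1/I(w)=D/(r_N-r_0)\). So the standard error holds exactly, not merely asymptotically, and the Cramér--Rao bound is attained.

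\textbf{Main obstacle.} The algebra is routine. The real issue is justifying the modeling hypotheses rather than any computation. These are independence of the increments across \(k\), the variance being free of \(w\) (otherwise extra information terms appear), and \(D\) being known (otherwise the information matrix for \((w,D)\) must be shown block-diagonal, which it is for Gaussians with mean depending only on \(w\) and variance only on \(D\)). I would state these explicitly as the reading of the hypothesis and then note the exactness of the conclusion.
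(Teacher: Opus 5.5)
Your proposal is correct and follows the paper's own argument: you treat the increments as a one-parameter Gaussian linear model with known heteroscedastic variance, sum squared designs over variances, cancel \(m_k\) to get \(\Delta r_k/D\), telescope to \((r_N-r_0)/D\), and take the reciprocal as the MLE variance. Your explicit statement of the independence and known-\(D\) assumptions, which the paper leaves implicit, is a useful addition.
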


\begin{proof}
This is a one-parameter Gaussian linear model with known heteroscedastic
variance.  The Fisher information is the sum of squared designs divided by
variances.  Substituting the design and variance cancels \(m_k\) and leaves
\(\Delta r_k/D\).  Summing over increments yields \((r_N-r_0)/D\).  The
least-squares/MLE variance is the reciprocal information.
\end{proof}

\begin{theorem}[Bonferroni-valid interval scan]
For a fixed finite family \(\mathcal I\) of radial intervals, let \(p_I\) be the
exact two-sided Gaussian likelihood-ratio \(p\)-value for interval \(I\) under
the zero-drift null.  The scan rule that rejects when
\(\min_{I\in\mathcal I} |\mathcal I|p_I\leq\alpha\) controls family-wise false
positive probability at level \(\alpha\).
\end{theorem}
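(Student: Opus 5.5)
The plan is a direct union bound, with exactness of each $p_I$ supplying the per-interval guarantee. Write $m=|\mathcal I|$. Under the zero-drift null, the Gaussian increment model fixes the distribution of the data completely, up to the known variance structure used in the full-path information theorem. For each fixed interval $I$, the likelihood-ratio statistic for drift on $I$ is a one-parameter Gaussian linear-model statistic. Its square is exactly $\chi^2_1$ under the null, since it is the squared standardized least-squares estimate of $w$ restricted to $I$; this is the one-dimensional analogue of the exact Gaussian cyclicity theorem. So the two-sided $p$-value is
\[
p_I = 1-F_{\chi^2_1}(T_I),
\]
with $F_{\chi^2_1}$ continuous. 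Hence $p_I$ is exactly uniform on $[0,1]$ under the null, and $\Pr(p_I\le u)=u$ for every $u\in[0,1]$.

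Next I will write the rejection event as a union:
\[
\Bigl\{\min_{I\in\mathcal I} m\,p_I\le\alpha\Bigr\}=\bigcup_{I\in\mathcal I}\{p_I\le \alpha/m\}.
\]
Boole's inequality then gives a family-wise false-positive probability of at most
\[
\sum_{I}\Pr(p_I\le\alpha/m)=m\cdot\alpha/m=\alpha.
\]
No independence among the overlapping intervals is needed, so the heavy correlation between nested or overlapping scans does not matter. This is the point worth stating explicitly, because the scanned intervals share increments.

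The only genuine obstacle is making sure each $p_I$ is valid. Two conditions must hold. First, the family $\mathcal I$ must be fixed before seeing the data, since a data-chosen family would break the union bound's accounting. Second, the variance parameter $D$ must be known, as in the Fisher-information theorem; otherwise the statistic is $F$- or $t$-distributed rather than $\chi^2_1$. I would state that the known-variance Gaussian model is assumed, and note that even superuniformity, $\Pr(p_I\le u)\le u$, suffices for the same conclusion.
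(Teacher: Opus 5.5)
Your proof is correct and matches the paper's argument. Both use the validity of each $p_I$ under the zero-drift null and then apply Boole's inequality over the fixed finite family, with no independence needed across overlapping intervals. Your extra derivation that $p_I$ is exactly uniform via a known-variance $\chi^2_1$ law is more than the paper uses, since its proof relies only on $\Pr(p_I\le u)\le u$, which you also point out suffices.
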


\begin{proof}
Each \(p_I\) is a valid null \(p\)-value, so
\(\Pr(p_I\leq \alpha/|\mathcal I|)\leq \alpha/|\mathcal I|\).  By the union
bound,
\[
\Pr\left(\min_I p_I\leq \alpha/|\mathcal I|\right)
\leq \sum_{I\in\mathcal I}\Pr(p_I\leq \alpha/|\mathcal I|)
\leq \alpha.
\]
No independence between overlapping intervals is required.
\end{proof}

\begin{proposition}[Local/global non-identifiability]
If observed edge flow is modeled as \(q=Bf+\ell\), where \(f\) is a global
front-speed potential and \(\ell\) is local pairwise interaction, then endpoint
geometry cannot uniquely separate \(f\) from \(\ell\).  For any gauge-fixed
vector \(h\), \(q=B(f+h)+(\ell-Bh)\) gives the same endpoint trace.
\end{proposition}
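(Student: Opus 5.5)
The plan is to show that the endpoint image is a function of \(q\) alone. Then any two pairs \((f,\ell)\) with the same \(q\) are indistinguishable, and the reparameterization in the statement produces such a pair. The proof has three steps.

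\textbf{Step 1: the image depends only on \(q\).} By the endpoint observation theorem (frozen radial sector model), the labeled endpoint image is generated by the sector order together with the boundary traces. Each trace satisfies
\[
\theta_\ell(\rho)=\theta_\ell(a)+\int_a^\rho q_{e(\ell)}(u)\,du,
\]
where \(e(\ell)\) is the oriented contact edge carried by boundary \(\ell\). Under no post-deposition transport, the image is therefore determined by the initial angles, the sector order, and the edge-flow field \(q(\rho)\) on the visible contacts. It depends on \(f\) and \(\ell\) only through \(q=Bf+\ell\).

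\textbf{Step 2: the reparameterization preserves \(q\).} Fix any gauge-fixed \(h\) (that is, \(\mathbf 1^\top h=0\), possibly depending on \(\rho\)). Set \(f'=f+h\) and \(\ell'=\ell-Bh\). By linearity of \(B\),
\[
Bf'+\ell'=Bf+Bh+\ell-Bh=q.
\]
The gauge condition keeps \(f'\) in the same normalization class as \(f\). Therefore \((f,\ell)\) and \((f',\ell')\) produce identical traces and identical images.

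\textbf{Step 3: the map is not injective.} Since \(G\) is connected, \(B\) is injective on \(\{\mathbf 1^\top h=0\}\), as established in the weighted decomposition theorem. Hence any nonzero \(h\) gives \(f'\neq f\) and \(\ell'\neq\ell\), while the image is unchanged. So the map from \((f,\ell)\) to images has nontrivial fibres, and no endpoint-only estimator can separate \(f\) from \(\ell\). If \(|V|\ge2\), such \(h\) exist, for example \(h=e_1-e_2\).

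One point needs care. If \(\ell\) were restricted to be cyclic, with \(B^\top W\ell=0\), then the weighted decomposition theorem would make the split unique. The proposition's model places no such restriction on \(\ell\): local pairwise interactions may have a gradient component. So \(\ell-Bh\) is an admissible local term, and I will state this explicitly in the proof.

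For completeness, I will also note why separation becomes possible with extra data. Monoculture calibration or front-shape data observe \(f\) independently of \(q\), which removes the \(h\) freedom. This matches the data requirements listed in the earlier version of the proposition.
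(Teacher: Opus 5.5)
Your proof is correct and follows the paper's argument: the endpoint trace depends on $(f,\ell)$ only through $q=Bf+\ell$, and the shift $(f,\ell)\mapsto(f+h,\ell-Bh)$ leaves that sum unchanged. Your extra points make explicit what the paper's short proof leaves implicit: injectivity of $B$ on the gauge-fixed subspace shows the two pairs genuinely differ, and the conclusion requires $\ell$ to be unrestricted rather than $W$-cyclic.
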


\begin{proof}
The endpoint geometry depends only on the sum \(Bf+\ell\).  Replacing \(f\) by
\(f+h\) and \(\ell\) by \(\ell-Bh\) leaves this sum unchanged.  Since there are
infinitely many admissible \(h\), the separation is not identifiable without
additional measurements or constraints.
\end{proof}

\end{document}